
\documentclass[nohyperref]{article}

\usepackage{microtype}
\usepackage{graphicx}
\usepackage{subfigure}
\usepackage{booktabs} 
\usepackage{mathtools}
\usepackage{url}
\usepackage{enumitem}

\usepackage{hyperref}



\usepackage[accepted]{icml2022}

\usepackage{amsmath}
\usepackage{amssymb}
\usepackage{mathtools}
\usepackage{amsthm}

\usepackage[capitalize,noabbrev]{cleveref}

\theoremstyle{plain}
\newtheorem{theorem}{Theorem}[section]

\newtheorem{lemma}[theorem]{Lemma}

\theoremstyle{definition}
\newtheorem{definition}[theorem]{Definition}
\newtheorem{assumption}[theorem]{Assumption}
\theoremstyle{remark}

\usepackage[textsize=tiny]{todonotes}


\newcommand{\fairnesspapertitle}{Towards A Scalable Solution for \\Improving Multi-Group Fairness in Compositional Classification}

\newcommand{\yhat}{\widehat{y}}
\newcommand{\Yhat}{\widehat{Y}}

\icmltitlerunning{Towards A Scalable Solution for Improving  Multi-Group Fairness in Compositional Classification}

\begin{document}

\twocolumn[
\icmltitle{\fairnesspapertitle}


\icmlsetsymbol{equal}{*}

\begin{icmlauthorlist}
\icmlauthor{James Atwood}{goog}
\icmlauthor{Tina Tian}{goog}
\icmlauthor{Ben Packer}{goog}
\icmlauthor{Meghana Deodhar}{goog}
\icmlauthor{Jilin Chen}{goog}
\icmlauthor{Alex Beutel}{openai}
\icmlauthor{Flavien Prost}{goog}
\icmlauthor{Ahmad Beirami}{goog}
\end{icmlauthorlist}

\icmlaffiliation{goog}{Google}
\icmlaffiliation{openai}{OpenAI (work done while at Google)}

\icmlcorrespondingauthor{James Atwood}{atwoodj@google.com}
\icmlcorrespondingauthor{Tina Tian}{ttian@google.com}
\icmlcorrespondingauthor{Ahmad Beirami}{beirami@google.com}

\icmlkeywords{Machine Learning, ICML}

\vskip 0.2in
]



\printAffiliationsAndNotice{}  

\begin{abstract}
Despite the rich literature on machine learning fairness, relatively little attention has been paid to remediating complex systems, where the final prediction is the combination of multiple classifiers and where multiple groups are present. In this paper, we first show that natural baseline approaches for improving equal opportunity fairness scale linearly with the product of the number of remediated groups and the number of remediated prediction labels, rendering them impractical. We then introduce two simple techniques, called {\em task-overconditioning} and {\em group-interleaving}, to achieve a constant scaling in this multi-group multi-label setup. Our experimental results in academic and real-world environments demonstrate the effectiveness of our proposal at mitigation within this environment.\vspace{-.2in}
\end{abstract}
\section{Introduction}
The literature around group fairness is relatively rich when we consider a binary classifier and desire to satisfy group fairness for a (binary) group~\cite{dwork2012fairness, kamiran2010discrimination, zafar2017fairness, beutel2017data, agarwal2018reductions, donini2018empirical, mary2019fairness, prost2019toward, baharloueirenyi, cho2020fair, lowy2022stochastic}. 
However, many real-world applications go beyond a single binary decision and we are often faced with multi-label systems where the end decision is a composition of the individual labels~\cite{dwork2018individual,adomaviciustoward,burke2002hybrid,he2014practical,wang2011cascade}. 

In this paper, we study a multi-label classification system where several binary classification decisions are combined to make a final prediction for any given input. We consider a special composite classifier where the overall system decision is $1$ if any of the individual binary classifier outputs are~$1$.  For example, consider a content moderation system for an online forum that predicts whether a given comment is toxic, insulting, or attacking identity, and hides a comment if any of the predictions are positive~\cite{pavlopoulos2020toxicity}. {\em How do we perform classification based on the combination of these individual predictions, and achieve specific group fairness goals?} This problem, which is a special instance of the compositional fairness~\cite{dwork2018individual, wang2021practical}, is the focus of this paper.
In addition, we also study a multi-group setting where we are interested in the fairness of the classifier system with respect to many groups

Among the in-process mitigation techniques~\citep{ristanoski2013discrimination, quadrianto2017recycling,kamiran2010discrimination,raff2018fair, aghaei2019learning,donini2018empirical,fish2015fair,grari2020learning, cho2020kde, prost2019toward, lowy2022stochastic,zafar2017fairness, berk2017convex, taskesen2020distributionally, chzhen2020minimax, baharlouei2019rnyi, jiang2020wasserstein, grari2019fairness}, we focus our mitigation strategy on the MinDiff technique~\cite{beutelchallenges,prost2019toward} for improving equality of opportunity~\cite{hardt2016equality} in classifiers. MinDiff has proven to be effective at inducing equality of opportunity while maintaining overall classifier performance across a variety of tasks by relying on the maximum mean discrepancy (MMD) estimators~\cite{gretton2012kernel, prost2019toward}. Importantly, MinDiff can be successfully applied to environments when instances labeled with group membership are very sparse, by using a dedicated data streams to ensure that each mini-batch contains a constant number of group labeled examples. 

A natural baseline in this scenario is an extension of MinDiff to fairness mitigation in multigroup, multilabel environments, where one regularizer is introduced per each group and per each classifier. However, this baseline causes the batch size to scale linearly in both the number of groups and number of prediction tasks being remediated. Even for small number of groups and and small number of classifiers, this can quickly grow out of hand to the extent that this baseline becomes impractical, especially when the baseline classifier is already expensive to train. 
This \emph{significantly} increases resource usage and slows training as the number of groups and prediction tasks grows. 
We propose two simple optimization techniques to achieve this  fairness goal with a constant scaling, with empirical verification. Our contributions are summarized below:\vspace{-.1in}
\begin{itemize}[itemsep=0pt]
    \item {\em Task-overconditioning:} The natural extension of MinDiff requires a batch of negative examples for each label, resulting in a constant scaling with the number of classifiers. Instead, {\em task-overconditioning} suggests using a single batch that contains the negative examples across all labels.
    We argue that task-overconditioning further aligns the overall optimization objective to that of mitigating the overall compositional decision, which is our goal, while also achieving a constant scaling with the number of individual classifiers.
    \item {\em Group-interleaving:} The natural extension of the mitigation solution requires a batch of negative examples with respect to each group at each iteration. Instead, {\em group-interleaving} makes the optimization objective stochastic with respect to groups at each iteration, allowing a constant scaling with the number of groups. 
    \item {\em Empirical verification:} We empirically show that our proposed method, which combines overconditioning and group-interleaving results in equal or better Pareto frontiers than baseline methods, with significant training speedup, on two datasets.
\end{itemize} 

{\bf Related Work.}
Methods for improving group fairness can generally be categorized in three main classes: \emph{pre-processing}, \emph{post-processing}, and \emph{in-processing} methods. Pre-processing algorithms \citep{feldman2015certifying, zemel2013learning,calmon2017optimized} transform the biased data features to a new space in which the labels and sensitive attributes are statistically independent. Post-processing approaches \citep{hardt2016equality, pleiss2017fairness, alghamdi2022beyond} achieve group fairness properties by altering the final decision of the classifier. 

The focus of this paper is on in-processing methods, which introduce constraints/regularizers for improving fairness in training. These methods have empirically shown to produce a more favorable performance/fairness Pareto tradeoff compared to other methods~\cite{lowy2022stochastic}. These include~\citep{ristanoski2013discrimination, quadrianto2017recycling} decision-trees \citep{kamiran2010discrimination,raff2018fair, aghaei2019learning}, support vector machines \citep{donini2018empirical}, boosting~\citep{fish2015fair}, neural networks \citep{grari2020learning, cho2020kde, prost2019toward, lowy2022stochastic},  or (logistic) regression models~\citep{zafar2017fairness, berk2017convex, taskesen2020distributionally, chzhen2020minimax, baharlouei2019rnyi, jiang2020wasserstein, grari2019fairness}.
See the recent paper by~\cite{hort2022bia} for a more comprehensive literature survey. We focus in this paper on the MinDiff technique, which has been successful across tasks \cite{beutelchallenges,prost2019toward,beutel2019fairness}.

This paper is also broadly related to the compositional fairness literature~\cite{dwork2018individual,dwork2020individual,wang2021practical}. In contrast to these works, we focus on a narrower sense of compositionality (only the intersection) for which we derive a scalable specialized solution.

\vspace{-.1in}
\section{Background \& Problem Setup}
Here, we formally provide the problem setup. Let $(x, \{y_t\}_{t \in [T]})$ represent a feature and a set of $T$ binary labels, where $x \in \mathcal{X}$, and $y_t \in \{0, 1\}$ for all $t \in [T]$\footnote{We define [T]:= \{1, \ldots, T\}.}. In our setup, the 
{\em overall decision} is a simple composite function of the individual labels: $
    y = \max_{t \in T} \{y_t\},
$
i.e., $y = 1$ if and only if there exists $t \in [T]$ s.t. $y_t = 1.$

We consider a scenario where we train $T$ individual predictors $\{\yhat_t(x; \theta)\}_{t \in [T]}$ in a multi-label setup, where  $\yhat_t(x; \theta) \in \{0, 1\}$ is a binary classifier from features $x$, and $\theta$ represents model parameters.\footnote{We often drop $(x; \theta)$ for brevity and refer to the output of the $t$-th classifier as $\yhat_t$.} Similarly, the {\em overall model prediction} is given by
$
    \yhat = \max_{t \in T} \{\yhat_t\},
$
i.e., $\yhat = 1$ if and only if there exists $t \in [T]$ s.t. $\yhat_t = 1.$ In other words, we predict that the overall label is $1$ when any of the underlying classifiers is triggered. As explained before, this setup is common in many applications, where the final decision (e.g. rejecting a comment \cite{dixontext}) based on $\yhat$ depends on many sub-decisions $\yhat_t$ (properties of the customer or comment).

Our goal is to optimize for fairness in the equal opportunity sense~\cite{hardt2016equality} for the overall model prediction with respect to multiple group memberships.\footnote{Fairness of  individual predictors is desirable but not required.} Let the set $\mathcal{G}$ capture all groups for which we would like to improve fairness. Let $g_m \in \{0, 1\}$ denote the identifier for membership in group $g_m,$ for $m \in [|\mathcal{G}|].$ 
\begin{definition}[overall equal opportunity with respect to membership in group $m$]
\label{def:eq_bias}
\begin{equation}
\label{eqn:eq_bias}
    P(\Yhat = 1 | G_m = 0, Y = 0) = P(\Yhat = 1 | G_m = 1, Y = 0), 
\end{equation}
Note that in this paper, we do not consider the intersectional fairness setting~\cite{kearns2018preventing, foulds2020intersectional} where the goal is to ensure fairness to all intersections of group memberships; see Appendix~\ref{sec:limitations}.
\label{obj-e2e}
\end{definition}
While there are numerous ways to optimize for fairness in machine learning, specifically in equal opportunity sense, the methods that achieve better fairness/performance Pareto frontiers have been empirically observed to be mostly in-processing methods~\cite{lowy2022stochastic}, where a regularizer is added to the (cross-entropy) training loss to mitigate the model fairness gap. The regularizer is usually of the form:
$
    D(\Yhat(\theta) , G_m | Y= 0) 
$
 where $D(\cdot, \cdot)$ is a proper divergence between two random variables.

Notice that in our problem setup $\Yhat(\theta)$ is not a differentiable function of the task-level predictors. Hence, we cannot use it directly to regularize the training of the individual task-level classifiers via backpropagation. This situation occurs when task-level predictors are not trained jointly or are even owned by different teams in an organization.

One intuitive solution to remediate this multi-label setup is to ensure that each individual classifier is fair for each group \cite{dwork2018individual}. This intuitive design is motivated by previous work \cite{wang2021practical} which finds that fairness of individual predictors might be sufficient to improve fairness of the overall system, even if there are no theoretical guarantees. We refer to this objective as task-level equal opportunity:
\begin{definition}[Task-level equal opportunity with respect to group $G_m$]
For any task $t \in [T]$ and group $m$ for $m \in |\mathcal{G}|$,
\begin{equation}
    P(\Yhat_t = 1 | G_m = 0, Y_t = 0) = P(\Yhat_t = 1 | G_m = 1, Y_t = 0).
\label{eqn:conditionals}
\end{equation}
\vspace{-.25in}
\end{definition}

\section{Baseline: Many MinDiff Regularizers}
\label{sec:baseline}
While there are many effective methods for solving task-level equal opportunity in \eqref{eqn:conditionals}, as discussed in related work, here we focus on an adaptation of MinDiff~\cite{zafar2017fairness,beutelchallenges,prost2019toward} to the multi-group multi-label classification case.  This regularization-based approach has a number of advantages. First, it does not require group labels at inference time, which is often true for real-world applications. Next, it has been empirically demonstrated to be effective at remediating fairness issues while still maintaining overall performance \cite{prost2019toward}.  Finally, it is designed to be effective when group-labeled instances are rare even in training data. 

This MinDiff technique introduces a new loss term based on maximum mean discrepancy (MMD) to promote (conditional) independence between the predictions and sensitive group ~\cite{prost2019toward} per each group and each task.  More precisely, the loss becomes:

\begin{align}
    L_{\textit{MinDiff}} = L_{\textit{CE}}(\hat{Y}, Y) + \lambda \sum\limits_{t \in T}  \sum\limits_{m \in [|\mathcal{G}|]} R_{t,m},
\end{align}
where $L_{\textit{CE}}$ is the empirical cross-entropy loss, $\lambda$ is a hyperparameter that sets the relative strength of the entropy and MMD loss,\footnote{In practice, one can tune the MinDiff strength for each regularizer at the expense of a complex hyperparameter tuning.} and

\begin{equation}
\label{eqn:baselinemindiffR}
    R_{t,m} \!=\! \textit{MMD}(\hat{Y}_t | Y_t \!=\! 0, G_m \!=\! 0; \hat{Y}_t | Y_t \!=\! 0, G_m \!=\! 1).
\end{equation}

Computing $R_{t,m}$ requires negative labeled instances ($Y_t = 0$) for both group membership cases ($G_m = 0$ and $G_m = 1$). In practice, instances with group membership information are much less frequently available than those without. Min Diff handles this by creating dedicated data streams for group-labeled instances that ensure that every batch has the data required to compute the MMD kernel component of the loss. As the number of groups and predictions tasks increases, this leads to $O(T \cdot |\mathcal{G}|)$ data streams that must be stored and a $T \cdot |\mathcal{G}|$ multiplier on the batch size.

\vspace{-.1in}
\section{Proposed Method: MinDiff-IO}
We now describe our proposed method, {\em MinDiff-IO}, which is built on two main components: {\em Group-Interleaving} and {\em Task-Overconditioning.} The central insight behind these two approaches is that we can still accomplish our goal, overall equal opportunity defined by Equation \eqref{eqn:eq_bias}, by optimizing a slightly different objective that is better aligned and is easier to compute. We describe these techniques in the subsequent sections. 

\subsection{Task-Overconditioning}
The baseline method that targets {\em task-level equal opportunity} has a number of data streams and batch size that scales linearly with $T,$ making it intractable for  systems where $T$ might be large (e.g., $O(100)$). Additionally, it does not necessarily imply overall equal opportunity ~\cite{dwork2018individual}, Equation \eqref{eqn:eq_bias}, which is what is desired. 

In this section, we present our proposal towards satisfying overall equal opportunity in this compositional decision system. We provide limited theoretical motivation for why it might be more aligned with the overall fairness objective under restrictive assumptions. We shall also see in the experimental section (where those restrictive assumptions are not satisfied) that it leads to equal or better fairness/performance Pareto frontiers. 

\begin{definition}[Overconditioning task-level equal opportunity]
 For all $t \in [T],$
\begin{equation}
    P(\widehat{Y}_t \!=\! 1 | G_m \!=\! 0, Y \!=\! 0) \!=\! P(\widehat{Y}_t \!=\! 1 | G_m \!=\! 1,\! Y \!=\! 0). 
\end{equation}
\vspace{-0.25in}
\label{obj-oc}
\end{definition}
Note that, unlike Equation \eqref{eqn:baselinemindiffR}, we condition on all labels having negative truth. This has the effect of requiring only one dataset for all tasks when computing loss rather than $T$ datasets.
\begin{assumption}
Let task-level classifiers $\{\yhat_t(x; \theta)\}_{t \in [T]}$ be such that for all $x \in \mathcal{X},$ and for any $t \neq \tau,$ 
\begin{equation}
    \yhat_t(x; \theta) \yhat_\tau(x; \theta) = 0.
\end{equation}
In other words, the classifiers don't trigger simultaneously; if $\yhat_t = 1$ then $\yhat_\tau = 0$ for all $\tau \neq t.$
\label{assump1}
\end{assumption}

Notice that Assumption~\ref{assump1} is a strong assumption as it requires the classifiers to have non-overlapping coverage, which is not necessarily satisfied in practice. For example, in the content moderation example, a comment might be toxic, insulting, and attacking identity at the same time. While this assumption is very restrictive, we show that under this scenario overconditioning is perfectly aligned with the goal of mitigating overall classifier. We also don't need this assumption for our empirical results, which show improvements over the baseline classifier.
\begin{lemma}
If Assumption~\ref{assump1} is satisfied, then Definition~\ref{obj-oc} (overconditioned task-level equal opportunity) implies Definition~\ref{obj-e2e} (overall equal opportunity).
\label{lem:oc-implies-e2e}
\end{lemma}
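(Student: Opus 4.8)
The plan is to reduce the overall decision's conditional acceptance rate to a sum of the task-level conditional acceptance rates, leveraging Assumption~\ref{assump1}, and then to apply Definition~\ref{obj-oc} term by term. First I would note that since $\Yhat = \max_{t \in [T]} \Yhat_t$, the event $\{\Yhat = 1\}$ is exactly the union $\bigcup_{t \in [T]} \{\Yhat_t = 1\}$. Under Assumption~\ref{assump1}, for every $x \in \mathcal{X}$ at most one of the binary values $\yhat_t(x;\theta)$ equals $1$; hence the events $\{\Yhat_t = 1\}$ are pairwise disjoint. Conditioning on any event $\{G_m = g, Y = 0\}$ of positive probability, additivity over a disjoint union then gives
\[
P(\Yhat = 1 \mid G_m = g, Y = 0) = \sum_{t \in [T]} P(\Yhat_t = 1 \mid G_m = g, Y = 0)
\]
for each $g \in \{0, 1\}$.

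Next I would invoke Definition~\ref{obj-oc}, which asserts that each summand on the right-hand side above is unchanged between $g = 0$ and $g = 1$. Summing these $T$ equalities over $t$ and substituting the displayed identity for both $g = 0$ and $g = 1$ yields
\[
P(\Yhat = 1 \mid G_m = 0, Y = 0) = P(\Yhat = 1 \mid G_m = 1, Y = 0),
\]
which is precisely Equation~\eqref{eqn:eq_bias} in Definition~\ref{obj-e2e}, completing the argument.

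The only step requiring care is the first one: without Assumption~\ref{assump1}, the union would carry nonvanishing inclusion–exclusion corrections, and summing the task-level equalities would fail to control the overall probability, since the cross terms $P(\Yhat_t = 1, \Yhat_\tau = 1 \mid G_m = g, Y = 0)$ need not agree across $g$. So the non-overlapping-coverage hypothesis is used exactly to make the trigger events disjoint, and everything else is routine bookkeeping. I would also remark that the proof uses Assumption~\ref{assump1} only through this disjointness, so any weaker condition that still forces the events $\{\Yhat_t = 1\}$ to be disjoint would suffice.
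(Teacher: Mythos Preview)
Your proposal is correct and follows essentially the same approach as the paper: rewrite $\{\Yhat=1\}$ as the union of the task-level events, use Assumption~\ref{assump1} to make that union disjoint so the conditional probability splits as a sum, and then apply Definition~\ref{obj-oc} term by term. Your write-up is in fact a bit more explicit than the paper's about why disjointness is the crux (the inclusion--exclusion remark), but the argument is the same.
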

The proof is relegated to the appendix. Lemma~\ref{lem:oc-implies-e2e} determines a scenario where overconditioning task-level equal opportunity indeed implies the desired overall equal opportunity. Notice that even under Assumption~\ref{assump1}, Definition~\ref{obj-oc} is a stronger requirement than Definition~\ref{obj-e2e}, and is not implied by it. In other words, we might be able to satisfy the overall equal opportunity and yet the overconditioning equal opportunity might not be satisfied for all task-level classifiers.

To solve for Task-Overconditioning, we adapt MinDiff loss as follows:
\begin{align}
        L_{\textit{MinDiff-O}} = L_{\textit{CE}}(\hat{Y}, Y) +
        \lambda \sum_{t \in T} \sum_{m \in [|\mathcal{G}|]}  R^{\textit{O}}_{t,m},
\end{align}
where
\begin{align}
   R^{\textit{O}}_{t,m} \!=\! \textit{MMD}(\hat{Y}_t | Y \!=\! 0, G_m \!=\! 0; \hat{Y}_t | Y \!=\! 0, G_m \!=\! 1).
   \label{eqn:ROvercondition}
\end{align}

Note that there will be fewer data instances that are suitable for computing \eqref{eqn:ROvercondition}, which requires all labels to be jointly negative, than \eqref{eqn:baselinemindiffR}, which only requires individual labels to be negative. We have not found this to be an issue in practical applications where positive label incidence is low. 

\subsection{Group-Interleaving}
\label{sec:interleave}
MinDiff was originally designed to present remediation data from all groups to the model at each iteration. However, we can reduce the complexity of computing the MinDiff regularizer further by presenting only one group per batch to the model. In this case, the loss becomes:
\begin{equation}
 L_{\textit{MinDiff-IO}} = L_{\textit{CE}}(\hat{Y}, Y) + R^{\text{O}}_{M}  
\end{equation}
where $M$ is a random index supported on $[|\mathcal{G}|].$ In other words, here we remediate against a random draw from the groups at each iteration of the algorithm. Notice that the new loss is the same as the task-overconditioned loss in expectation, and is expected to converge to a stationary point of the same objective. On the other hand, when combined with Task-Overconditioning, the loss can be computed with only $O(1)$ extra instances in each batch, with no dependence on $|\mathcal{G}|$ and $T$.

\section{Evaluation Metrics}

For each binary group membership, i.e., $G_m \in \{0, 1\},$ where $G_m = 1$ is considered the minority group membership, we quantify the fairness gap through the following interchangeable metrics that are expressed in terms of the absolute gap and the ratio of the two groups:
\begin{equation}
     d_{EO, m} = |\textit{FPR}_{G_m = 1} - \textit{FPR}_{G_m = 0}|,
\end{equation}
and
\begin{equation}
    r_{EO, m} = \textit{FPR}_{G_m = 1} / \textit{FPR}_{G_m = 0},
\end{equation}
where $\widehat{P}$ denote the empirical distribution over a test set of $N$ i.i.d samples from $P_{XY}$, and for $i \in \{0, 1\}$,
$
    \textit{FPR}_{G_m = i} := \widehat{P}(\widehat{Y} = 1 | G_m = i, Y = 0).
$

To measure the classification performance we both compute the Area Under the ROC Curve (ROC AUC) of the classifier as well as accuracy.
Finally, to measure speed, we report the number of iterations per second achieved during model training.
\newcommand{\systemfigscale}{0.5}

\section{Experiments}
\label{sec:experiments}
\begin{table*}[t]
    \centering
    \resizebox{0.7\textwidth}{!}{
    \begin{tabular}{c|c|c|c|c}
         Method & Avg Steps / Sec ($\uparrow$) &  Avg AUCPR ($\uparrow$) &  $r_{EO, 1}$ ($\downarrow$)&  $r_{EO, 2}$ ($\downarrow$)\\
         \hline
         No remediation & $A$ & $B$ & $C$ & $D$ \\
         MinDiff & -- & -- & -- & -- \\
         MinDiff-O & $0.60 \times A$ & $0.96 \times B$ & $ 0.66 \times C$& $ 0.79 \times D$ \\
         MinDiff-IO & $0.86\times A$ &$0.96 \times B$ & $ 0.65 \times C$ & $ 0.83 \times D$ \\
    \end{tabular}}
    \caption{\small Comparison of different remediation techniques for the policy classification model.
    Avg AUCPR is a summary measurement of the area under the precision-recall curve for each policy classifier; and Group 1 and Group 2 FPR ratio ($r_{EO, 1}$ and $r_{EO, 2}$, respectively) denote the ratio of minority to baseline false positive rates at the system level. The baseline MinDiff remediation is too slow to run at this scale (second row). We show how the introduction of Task Overconditioning allows us to remediate at all (third row) and how adding Group Interleaving reduces the speed cost incurred by remediation (fourth row).}
    \vspace{-.2in}
    \label{tab:policy_table}
\end{table*}
We run two experiments.  The first experiment provides the Pareto frontier of fairness vs performance for each approach using a publicly-available academic dataset, and the second provides the performance, fairness, and speed of a real-world policy enforcement classifier at a particular operating point with each of the proposed approaches.  Overall, these experiments show that MinDiffIO provides equal or better fairness/performance while improving training speed.

\begin{figure}[t]
    \centering
    \includegraphics[scale=\systemfigscale]{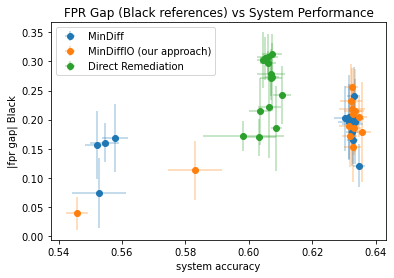}
    \vspace{-.1in}
    \caption{\small System-level tradeoff curve for the civil comments dataset, achieved by  varying regularizer strength $\lambda$. Note that direct remediation is outperformed by component-based approaches; for a given level of fairness, the component-based approaches provide better performance. Both component-based approaches offer similar performance. Five runs were performed for each $\lambda$ and the faint cross-hatches represent 95\% confidence intervals.}
    \vspace{-.15in}
    \label{fig:civilcommentssystem}
\end{figure}

\subsection{Civil Comments}
The first set of experiments are run on the Civil Comments Dataset~\cite{pavlopoulos2020toxicity}; details are given in Appendix \ref{sec:detailedcivilcomments}. Civil Comments contains comment text and seven associated crowd annotated labels related to the `civility' of the comment; whether the comment is an insult, toxic, or attacking identity, and so on. We use the subset of the data that are labeled with group information. Groups are related to race, ethnicity, gender, disability, and sexuality.

We train comment classifiers on three of the seven labels and combine the predictions into a system-level prediction: a comment is classified as unsafe is any of the prediction is unsafe. We compare this with `direct remediation' where a classifier is trained to predict the system-level label (the logical OR of the component labels) rather than the components. In addition, we compare with `component-based' remediation where MinDiff or MinDiffIO are applied to component classifiers. Results are shown for one group (Black) in Figure \ref{fig:civilcommentssystem}; results for other groups and component-level results for all predictions and groups can be found in Appendix \ref{sec:detailedcivilcomments}.

The plot displays the tradeoff between fairness and performance as the hyperparameter $\lambda$ is varied. As $\lambda$ increases, the contribution of the MMD component of the loss grows, leading to increased fairness at the cost of performance.

We observe that the component-based approaches offer better performance for a given fairness than direct remediation. Also, MindDiff and MinDiffIO offer qualitatively similar results.

\subsection{Product Policy Compliance Detection}
We now study a real world system, which is responsible for filtering out examples which break the product policy. This is similar to literature on toxic comment detection \cite{pavlopoulos2020toxicity} or hateful speech filtering \cite{dixontext}. To reflect the different facets of the product, a set of rules (10-1000) are defined and an example is against product policy if any given rule is broken. In practice, we use individual classifiers to predict each rule, and an example is filtered out if any individual soft prediction reaches a certain threshold.  Samples can be categorized into two sensitive attributes (each considered as binary) and we want to guarantee fairness to samples from each group, which lends itself to  the multi-label and multigroup classification.

Note that a false positive of this system is a user harm because policy-following content is flagged as policy-violating. Our goal is to reduce the gaps between false positive rates between minority groups and a baseline population.

We first evaluate the initial system without any remediation and find that two groups have high false positive rate differences.  Our goal is to design the mitigation strategy that reduces the observed gaps on the final policy (gap from Definition~\ref{def:eq_bias}) for both groups, while  maintaining good performance (measured by AUCPR) and training speed (training steps/sec).

In Table \ref{tab:policy_table}, we show four different remediation approaches. The first approach, unremediated, has some performance, fairness, and speed characteristics that we compare other approaches to. The second approach, baseline, is unworkably slow, so we are unable to run experiments or provide results. The third approach, which introduces Task Overconditioning, reduces fairness gaps with a minor hit to performance and a major hit to speed. Finally, the fourth approach adds Group Interleaving to mitigate the speed impact while maintaining similar fairness and performance characteristics.

\section{Conclusion}
Prior in-process equal opportunity remediation methods suffer from poor (linear) scaling in the number of prediction tasks and number of groups to remediate, making existing techniques sometimes impossible to apply to real-world scenarios. We present Mindiff-IO, a new method that builds on the MinDiff approach to provide constant scaling with respect to tasks and groups. We show that Mindiff-IO provides similar performance and fairness characteristics to MinDiff while scaling much better in multilabel and multigroup environments through experiments with both academic and real-world datasets. 
The limitations of this work are provided in Appendix~\ref{sec:limitations}.

\section*{Acknowledgements}
We would like to thank Preethi Lahoti, Ananth Balashankar, Lucian Cionca, and Katherine Heller for their constructive feedback on this paper.

\bibliographystyle{icml2022}
\bibliography{main}

\begin{thebibliography}{45}
\providecommand{\natexlab}[1]{#1}
\providecommand{\url}[1]{\texttt{#1}}
\expandafter\ifx\csname urlstyle\endcsname\relax
  \providecommand{\doi}[1]{doi: #1}\else
  \providecommand{\doi}{doi: \begingroup \urlstyle{rm}\Url}\fi

\bibitem[Adomavicius \& Tuzhilin(2005)Adomavicius and
  Tuzhilin]{adomaviciustoward}
Adomavicius, G. and Tuzhilin, A.
\newblock Toward the next generation of recommender systems: a survey of the
  state-of-the-art and possible extensions.
\newblock \emph{IEEE Transactions on Knowledge and Data Engineering},
  17\penalty0 (6):\penalty0 734--749, 2005.
\newblock \doi{10.1109/TKDE.2005.99}.

\bibitem[Agarwal et~al.(2018)Agarwal, Beygelzimer, Dud{\'\i}k, Langford, and
  Wallach]{agarwal2018reductions}
Agarwal, A., Beygelzimer, A., Dud{\'\i}k, M., Langford, J., and Wallach, H.
\newblock A reductions approach to fair classification.
\newblock In \emph{International Conference on Machine Learning}, pp.\  60--69.
  PMLR, 2018.

\bibitem[Aghaei et~al.(2019)Aghaei, Azizi, and Vayanos]{aghaei2019learning}
Aghaei, S., Azizi, M.~J., and Vayanos, P.
\newblock Learning optimal and fair decision trees for non-discriminative
  decision-making.
\newblock In \emph{Proceedings of the AAAI Conference on Artificial
  Intelligence}, volume~33, pp.\  1418--1426, 2019.

\bibitem[Alghamdi et~al.(2022)Alghamdi, Hsu, Jeong, Wang, Michalak, Asoodeh,
  and Calmon]{alghamdi2022beyond}
Alghamdi, W., Hsu, H., Jeong, H., Wang, H., Michalak, P., Asoodeh, S., and
  Calmon, F.
\newblock Beyond adult and compas: Fair multi-class prediction via information
  projection.
\newblock \emph{Advances in Neural Information Processing Systems},
  35:\penalty0 38747--38760, 2022.

\bibitem[Baharlouei et~al.(2020{\natexlab{a}})Baharlouei, Nouiehed, Beirami,
  and Razaviyayn]{baharlouei2019rnyi}
Baharlouei, S., Nouiehed, M., Beirami, A., and Razaviyayn, M.
\newblock Rényi fair inference.
\newblock In \emph{ICLR}, 2020{\natexlab{a}}.

\bibitem[Baharlouei et~al.(2020{\natexlab{b}})Baharlouei, Nouiehed, Beirami,
  and Razaviyayn]{baharloueirenyi}
Baharlouei, S., Nouiehed, M., Beirami, A., and Razaviyayn, M.
\newblock R{\'e}nyi fair inference.
\newblock In \emph{International Conference on Learning Representations},
  2020{\natexlab{b}}.

\bibitem[Berk et~al.(2017)Berk, Heidari, Jabbari, Joseph, Kearns, Morgenstern,
  Neel, and Roth]{berk2017convex}
Berk, R., Heidari, H., Jabbari, S., Joseph, M., Kearns, M., Morgenstern, J.,
  Neel, S., and Roth, A.
\newblock A convex framework for fair regression.
\newblock \emph{arXiv preprint arXiv:1706.02409}, 2017.

\bibitem[Beutel et~al.(2017)Beutel, Chen, Zhao, and Chi]{beutel2017data}
Beutel, A., Chen, J., Zhao, Z., and Chi, E.~H.
\newblock Data decisions and theoretical implications when adversarially
  learning fair representations.
\newblock \emph{arXiv preprint arXiv:1707.00075}, 2017.

\bibitem[Beutel et~al.(2019{\natexlab{a}})Beutel, Chen, Doshi, Qian, Wei, Wu,
  Heldt, Zhao, Hong, Chi, et~al.]{beutel2019fairness}
Beutel, A., Chen, J., Doshi, T., Qian, H., Wei, L., Wu, Y., Heldt, L., Zhao,
  Z., Hong, L., Chi, E.~H., et~al.
\newblock Fairness in recommendation ranking through pairwise comparisons.
\newblock In \emph{Proceedings of the 25th ACM SIGKDD international conference
  on knowledge discovery \& data mining}, pp.\  2212--2220, 2019{\natexlab{a}}.

\bibitem[Beutel et~al.(2019{\natexlab{b}})Beutel, Chen, Doshi, Qian, Woodruff,
  Luu, Kreitmann, Bischof, and Chi]{beutelchallenges}
Beutel, A., Chen, J., Doshi, T., Qian, H., Woodruff, A., Luu, C., Kreitmann,
  P., Bischof, J., and Chi, E.~H.
\newblock Putting fairness principles into practice: Challenges, metrics, and
  improvements.
\newblock \emph{CoRR}, abs/1901.04562, 2019{\natexlab{b}}.
\newblock URL \url{http://arxiv.org/abs/1901.04562}.

\bibitem[Burke(2002)]{burke2002hybrid}
Burke, R.
\newblock Hybrid recommender systems: Survey and experiments.
\newblock \emph{User modeling and user-adapted interaction}, 12:\penalty0
  331--370, 2002.

\bibitem[Calmon et~al.(2017)Calmon, Wei, Vinzamuri, Ramamurthy, and
  Varshney]{calmon2017optimized}
Calmon, F., Wei, D., Vinzamuri, B., Ramamurthy, K.~N., and Varshney, K.~R.
\newblock Optimized pre-processing for discrimination prevention.
\newblock In \emph{Advances in Neural Information Processing Systems}, pp.\
  3992--4001, 2017.

\bibitem[Cho et~al.(2020{\natexlab{a}})Cho, Hwang, and Suh]{cho2020fair}
Cho, J., Hwang, G., and Suh, C.
\newblock A fair classifier using mutual information.
\newblock In \emph{2020 IEEE International Symposium on Information Theory
  (ISIT)}, pp.\  2521--2526. IEEE, 2020{\natexlab{a}}.

\bibitem[Cho et~al.(2020{\natexlab{b}})Cho, Hwang, and Suh]{cho2020kde}
Cho, J., Hwang, G., and Suh, C.
\newblock A fair classifier using kernel density estimation.
\newblock In Larochelle, H., Ranzato, M., Hadsell, R., Balcan, M., and Lin, H.
  (eds.), \emph{Advances in Neural Information Processing Systems 33: Annual
  Conference on Neural Information Processing Systems 2020, NeurIPS 2020,
  December 6-12, 2020, virtual}, 2020{\natexlab{b}}.

\bibitem[Chzhen \& Schreuder(2020)Chzhen and Schreuder]{chzhen2020minimax}
Chzhen, E. and Schreuder, N.
\newblock A minimax framework for quantifying risk-fairness trade-off in
  regression.
\newblock \emph{arXiv preprint arXiv:2007.14265}, 2020.

\bibitem[Dixon et~al.(2018)Dixon, Li, Sorensen, Thain, and
  Vasserman]{dixontext}
Dixon, L., Li, J., Sorensen, J., Thain, N., and Vasserman, L.
\newblock Measuring and mitigating unintended bias in text classification.
\newblock In \emph{Proceedings of the 2018 AAAI/ACM Conference on AI, Ethics,
  and Society}, AIES '18, pp.\  67–73, New York, NY, USA, 2018. Association
  for Computing Machinery.
\newblock ISBN 9781450360128.
\newblock \doi{10.1145/3278721.3278729}.
\newblock URL \url{https://doi.org/10.1145/3278721.3278729}.

\bibitem[Donini et~al.(2018)Donini, Oneto, Ben-David, Shawe-Taylor, and
  Pontil]{donini2018empirical}
Donini, M., Oneto, L., Ben-David, S., Shawe-Taylor, J.~S., and Pontil, M.
\newblock Empirical risk minimization under fairness constraints.
\newblock In \emph{Advances in Neural Information Processing Systems}, pp.\
  2791--2801, 2018.

\bibitem[Dwork \& Ilvento(2018)Dwork and Ilvento]{dwork2018individual}
Dwork, C. and Ilvento, C.
\newblock Individual fairness under composition.
\newblock \emph{Proceedings of Fairness, Accountability, Transparency in
  Machine Learning}, 2018.

\bibitem[Dwork et~al.(2012)Dwork, Hardt, Pitassi, Reingold, and
  Zemel]{dwork2012fairness}
Dwork, C., Hardt, M., Pitassi, T., Reingold, O., and Zemel, R.
\newblock Fairness through awareness.
\newblock In \emph{Proceedings of the 3rd innovations in theoretical computer
  science conference}, pp.\  214--226, 2012.

\bibitem[Dwork et~al.(2020)Dwork, Ilvento, and Jagadeesan]{dwork2020individual}
Dwork, C., Ilvento, C., and Jagadeesan, M.
\newblock Individual fairness in pipelines.
\newblock \emph{arXiv preprint arXiv:2004.05167}, 2020.

\bibitem[Feldman et~al.(2015)Feldman, Friedler, Moeller, Scheidegger, and
  Venkatasubramanian]{feldman2015certifying}
Feldman, M., Friedler, S.~A., Moeller, J., Scheidegger, C., and
  Venkatasubramanian, S.
\newblock Certifying and removing disparate impact.
\newblock In \emph{proceedings of the 21th ACM SIGKDD international conference
  on knowledge discovery and data mining}, pp.\  259--268, 2015.

\bibitem[Fish et~al.(2015)Fish, Kun, and Lelkes]{fish2015fair}
Fish, B., Kun, J., and Lelkes, A.~D.
\newblock Fair boosting: a case study.
\newblock In \emph{Workshop on Fairness, Accountability, and Transparency in
  Machine Learning}. Citeseer, 2015.

\bibitem[Foulds et~al.(2020)Foulds, Islam, Keya, and
  Pan]{foulds2020intersectional}
Foulds, J.~R., Islam, R., Keya, K.~N., and Pan, S.
\newblock An intersectional definition of fairness.
\newblock In \emph{2020 IEEE 36th International Conference on Data Engineering
  (ICDE)}, pp.\  1918--1921. IEEE, 2020.

\bibitem[Grari et~al.(2019)Grari, Ruf, Lamprier, and
  Detyniecki]{grari2019fairness}
Grari, V., Ruf, B., Lamprier, S., and Detyniecki, M.
\newblock Fairness-aware neural {R}\'eyni minimization for continuous features.
\newblock \emph{arXiv preprint arXiv:1911.04929}, 2019.

\bibitem[Grari et~al.(2020)Grari, Hajouji, Lamprier, and
  Detyniecki]{grari2020learning}
Grari, V., Hajouji, O.~E., Lamprier, S., and Detyniecki, M.
\newblock Learning unbiased representations via {R}\'enyi minimization.
\newblock \emph{arXiv preprint arXiv:2009.03183}, 2020.

\bibitem[Gretton et~al.(2012)Gretton, Borgwardt, Rasch, Sch{\"o}lkopf, and
  Smola]{gretton2012kernel}
Gretton, A., Borgwardt, K.~M., Rasch, M.~J., Sch{\"o}lkopf, B., and Smola, A.
\newblock A kernel two-sample test.
\newblock \emph{The Journal of Machine Learning Research}, 13\penalty0
  (1):\penalty0 723--773, 2012.

\bibitem[Hardt et~al.(2016)Hardt, Price, and Srebro]{hardt2016equality}
Hardt, M., Price, E., and Srebro, N.
\newblock Equality of opportunity in supervised learning.
\newblock In \emph{Advances in neural information processing systems}, pp.\
  3315--3323, 2016.

\bibitem[He et~al.(2014)He, Pan, Jin, Xu, Liu, Xu, Shi, Atallah, Herbrich,
  Bowers, et~al.]{he2014practical}
He, X., Pan, J., Jin, O., Xu, T., Liu, B., Xu, T., Shi, Y., Atallah, A.,
  Herbrich, R., Bowers, S., et~al.
\newblock Practical lessons from predicting clicks on ads at facebook.
\newblock In \emph{Proceedings of the eighth international workshop on data
  mining for online advertising}, pp.\  1--9, 2014.

\bibitem[Hort et~al.(2022)Hort, Chen, Zhang, Sarro, and Harman]{hort2022bia}
Hort, M., Chen, Z., Zhang, J.~M., Sarro, F., and Harman, M.
\newblock Bia mitigation for machine learning classifiers: A comprehensive
  survey.
\newblock \emph{arXiv preprint arXiv:2207.07068}, 2022.

\bibitem[Jiang et~al.(2020)Jiang, Pacchiano, Stepleton, Jiang, and
  Chiappa]{jiang2020wasserstein}
Jiang, R., Pacchiano, A., Stepleton, T., Jiang, H., and Chiappa, S.
\newblock Wasserstein fair classification.
\newblock In \emph{Uncertainty in Artificial Intelligence}, pp.\  862--872.
  PMLR, 2020.

\bibitem[Kamiran et~al.(2010)Kamiran, Calders, and
  Pechenizkiy]{kamiran2010discrimination}
Kamiran, F., Calders, T., and Pechenizkiy, M.
\newblock Discrimination aware decision tree learning.
\newblock In \emph{2010 IEEE International Conference on Data Mining}, pp.\
  869--874. IEEE, 2010.

\bibitem[Kearns et~al.(2018)Kearns, Neel, Roth, and Wu]{kearns2018preventing}
Kearns, M., Neel, S., Roth, A., and Wu, Z.~S.
\newblock Preventing fairness gerrymandering: Auditing and learning for
  subgroup fairness.
\newblock In \emph{International Conference on Machine Learning}, pp.\
  2564--2572, 2018.

\bibitem[Lowy et~al.(2022)Lowy, Baharlouei, Pavan, Razaviyayn, and
  Beirami]{lowy2022stochastic}
Lowy, A., Baharlouei, S., Pavan, R., Razaviyayn, M., and Beirami, A.
\newblock A stochastic optimization framework for fair risk minimization.
\newblock \emph{Transactions of Machine Learning Research}, 2022.

\bibitem[Mary et~al.(2019)Mary, Calauzenes, and El~Karoui]{mary2019fairness}
Mary, J., Calauzenes, C., and El~Karoui, N.
\newblock Fairness-aware learning for continuous attributes and treatments.
\newblock In \emph{International Conference on Machine Learning}, pp.\
  4382--4391. PMLR, 2019.

\bibitem[Pavlopoulos et~al.(2020)Pavlopoulos, Sorensen, Dixon, Thain, and
  Androutsopoulos]{pavlopoulos2020toxicity}
Pavlopoulos, J., Sorensen, J., Dixon, L., Thain, N., and Androutsopoulos, I.
\newblock Toxicity detection: Does context really matter?, 2020.

\bibitem[Pleiss et~al.(2017)Pleiss, Raghavan, Wu, Kleinberg, and
  Weinberger]{pleiss2017fairness}
Pleiss, G., Raghavan, M., Wu, F., Kleinberg, J., and Weinberger, K.~Q.
\newblock On fairness and calibration.
\newblock In \emph{Advances in Neural Information Processing Systems}, pp.\
  5680--5689, 2017.

\bibitem[Prost et~al.(2019)Prost, Qian, Chen, Chi, Chen, and
  Beutel]{prost2019toward}
Prost, F., Qian, H., Chen, Q., Chi, E.~H., Chen, J., and Beutel, A.
\newblock Toward a better trade-off between performance and fairness with
  kernel-based distribution matching.
\newblock \emph{arXiv preprint arXiv:1910.11779}, 2019.

\bibitem[Quadrianto \& Sharmanska(2017)Quadrianto and
  Sharmanska]{quadrianto2017recycling}
Quadrianto, N. and Sharmanska, V.
\newblock Recycling privileged learning and distribution matching for fairness.
\newblock \emph{Advances in Neural Information Processing Systems}, 30, 2017.

\bibitem[Raff et~al.(2018)Raff, Sylvester, and Mills]{raff2018fair}
Raff, E., Sylvester, J., and Mills, S.
\newblock Fair forests: Regularized tree induction to minimize model bias.
\newblock In \emph{Proceedings of the 2018 AAAI/ACM Conference on AI, Ethics,
  and Society}, pp.\  243--250, 2018.

\bibitem[Ristanoski et~al.(2013)Ristanoski, Liu, and
  Bailey]{ristanoski2013discrimination}
Ristanoski, G., Liu, W., and Bailey, J.
\newblock Discrimination aware classification for imbalanced datasets.
\newblock In \emph{Proceedings of the 22nd ACM international conference on
  Information \& Knowledge Management}, pp.\  1529--1532, 2013.

\bibitem[Taskesen et~al.(2020)Taskesen, Nguyen, Kuhn, and
  Blanchet]{taskesen2020distributionally}
Taskesen, B., Nguyen, V.~A., Kuhn, D., and Blanchet, J.
\newblock A distributionally robust approach to fair classification.
\newblock \emph{arXiv preprint arXiv:2007.09530}, 2020.

\bibitem[Wang et~al.(2011)Wang, Lin, and Metzler]{wang2011cascade}
Wang, L., Lin, J., and Metzler, D.
\newblock A cascade ranking model for efficient ranked retrieval.
\newblock In \emph{Proceedings of the 34th international ACM SIGIR conference
  on Research and development in Information Retrieval}, pp.\  105--114, 2011.

\bibitem[Wang et~al.(2021)Wang, Thain, Sinha, Prost, Chi, Chen, and
  Beutel]{wang2021practical}
Wang, X., Thain, N., Sinha, A., Prost, F., Chi, E.~H., Chen, J., and Beutel, A.
\newblock Practical compositional fairness: Understanding fairness in
  multi-component recommender systems.
\newblock In \emph{Proceedings of the 14th ACM International Conference on Web
  Search and Data Mining}, pp.\  436--444, 2021.

\bibitem[Zafar et~al.(2017)Zafar, Valera, Rogriguez, and
  Gummadi]{zafar2017fairness}
Zafar, M.~B., Valera, I., Rogriguez, M.~G., and Gummadi, K.~P.
\newblock Fairness constraints: Mechanisms for fair classification.
\newblock In \emph{Artificial Intelligence and Statistics}, pp.\  962--970.
  PMLR, 2017.

\bibitem[Zemel et~al.(2013)Zemel, Wu, Swersky, Pitassi, and
  Dwork]{zemel2013learning}
Zemel, R., Wu, Y., Swersky, K., Pitassi, T., and Dwork, C.
\newblock Learning fair representations.
\newblock In \emph{International Conference on Machine Learning}, pp.\
  325--333, 2013.

\end{thebibliography}

\appendix
\onecolumn

\section{Limitations}
\label{sec:limitations}
There are three limitations to the approaches mentioned here. First, overconditioning requires instances that have negative ground truth for all modeled labels\footnote{Note that the ground truth negative requirement is present when optimizing for equality of opportunity with respect to false positive rates. If equality of opportunity with respect to false negative rates were the goal, the method would instead require ground truth positives.} in order to compute the Min Diff loss. This is a realistic environment; for instance, a policy dataset where policy-violating content is rare. However, if true positives are very common, this method may no longer be effective.

The second limitation is with respect to intersectional group fairness. The interleaving optimization described in Section \ref{sec:interleave} does not explicitly represent or remediate the intersection of groups. Intersectional remediation is a more difficult problem due to the exponential scaling of the number of intersections with respect to the number of groups. We opted not to remediate intersections because of the sparsity of our group labels - very few instances are labeled with more than one group. We believe that techniques that effectively and efficiently address intersectional remediation are an interesting area for future work.

Third, we only consider MinDiff-based techniques in this paper and demonstrate that Mindiff-IO has better scaling characteristics than the original MinDiff approach. Future work could compare the fairness, performance, and scaling properties of Mindiff-IO with other methods of achieving equal opportunity. In addition, future work could test the application of interleaving and overconditioning to other in-processing methods.

\section{Proofs, Experiment Details, and Further Results}
\subsection{Proof of Lemma~\ref{lem:oc-implies-e2e}}
\begin{proof}[Proof of Lemma~\ref{lem:oc-implies-e2e}]
The proof is completed by noting that
\begin{align}
    P(\widehat{Y} = 1 | G = 0, Y = 0) &= 
    P(\max_{t \in [T]} \widehat{Y}_t = 1| G = 0, Y = 0)\nonumber \\
    & = \sum_{t \in [T]} P(\widehat{Y}_t = 1 | G = 0, Y = 0) \label{eq:assump1}\\
    & = \sum_{t \in [T]} P(\widehat{Y}_t = 1 | G = 1, Y = 0) \label{eq:obj-oc} \\
   & =  P(\max_{t \in [T]}\widehat{Y}_t = 1 | G = 1, Y = 0)  \label{eq:assump1-2} \\
   & = P(\widehat{Y} = 1 | G = 1, Y = 0)\nonumber,
\end{align}
where~\eqref{eq:assump1} follows from Assumption~\ref{assump1}, and~\eqref{eq:obj-oc} follows from Definition~\ref{obj-oc}, and \eqref{eq:assump1-2} follows from Assumption~\ref{assump1}. 
\end{proof}

\subsection{Civil Comments Experimental Details}
For these experiments we select three labels (identity attack, insult, and toxicity) as well as four groups (black, gay or lesbian, female, and transgender) for modeling and remediation. Our model consists of a single hidden layer deep neural network that takes a simple hashing trick bag of words vectorization of the comment text as input. The hidden layer and text vector have 64 and 1,000 elements, respectively.

All models are trained for 25 epochs with a learning rate if 0.1 and a Gaussian kernel weight of 1.0.

We present empirical Pareto frontiers of fairness (here, the absolute value of the difference between false positive rates for a group and baseline) and performance (here, ROC AUC). Thresholds for the fairness dimension are selected through calibration on a validation set.

\begin{figure*}[!htb]
\centering
\includegraphics[scale=\systemfigscale]{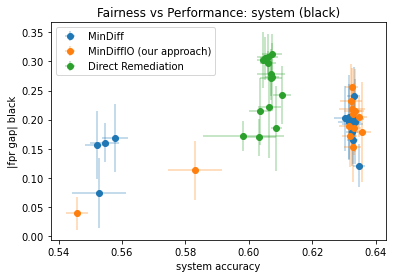}
\includegraphics[scale=\systemfigscale]{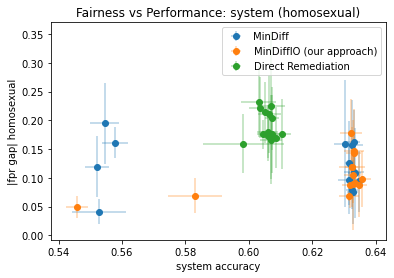}
\includegraphics[scale=\systemfigscale]{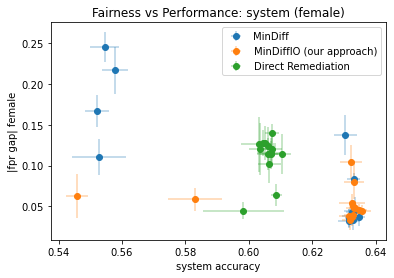}
\includegraphics[scale=\systemfigscale]{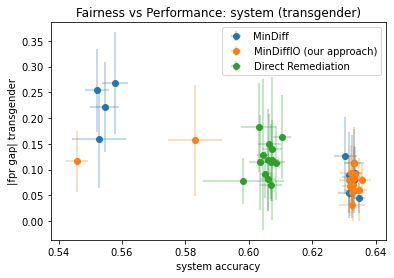}
\caption{System-level results for all four groups. Note that, in each case, component-based techniques outperform direct remediation by offering a higher performance for a given fairness.}
\label{fig:civilcommentsfullsystem}
\end{figure*}

\newcommand{\figscale}{0.4}
\begin{figure*}[!htb]
    \centering
    \includegraphics[scale=\figscale]{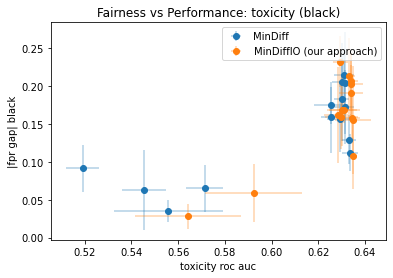}
    \includegraphics[scale=\figscale]{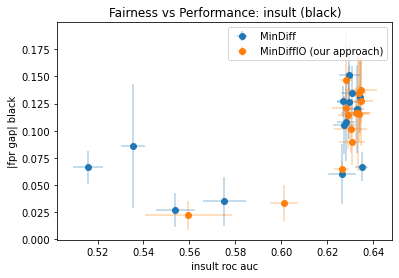}
    \includegraphics[scale=\figscale]{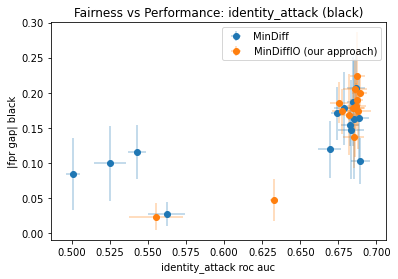}
    \includegraphics[scale=\figscale]{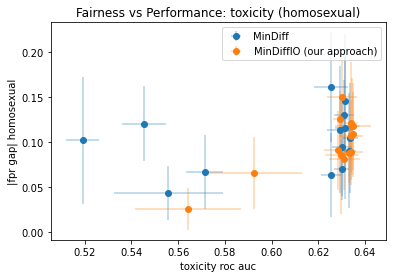}
    \includegraphics[scale=\figscale]{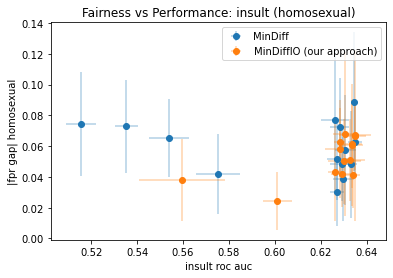}
    \includegraphics[scale=\figscale]{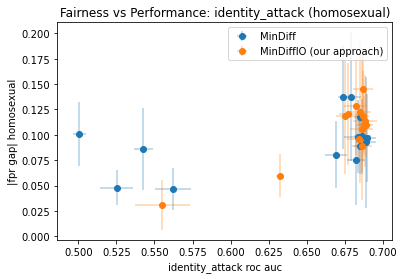}
    \includegraphics[scale=\figscale]{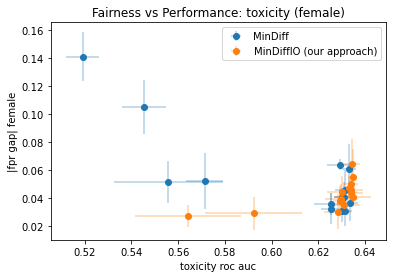}
    \includegraphics[scale=\figscale]{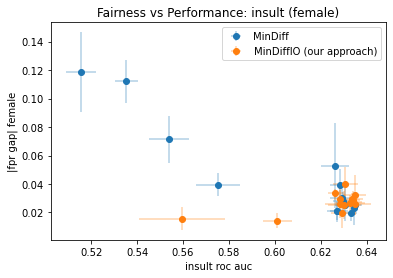}
    \includegraphics[scale=\figscale]{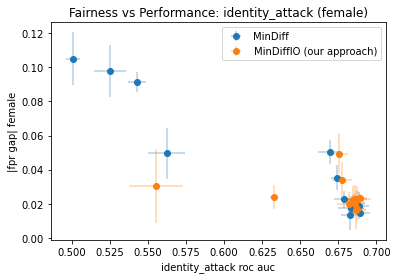}
    \includegraphics[scale=\figscale]{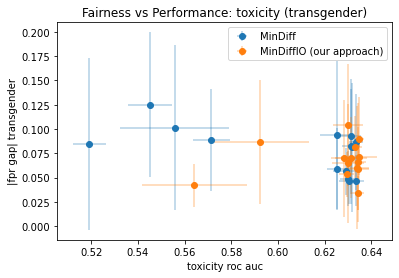}
    \includegraphics[scale=\figscale]{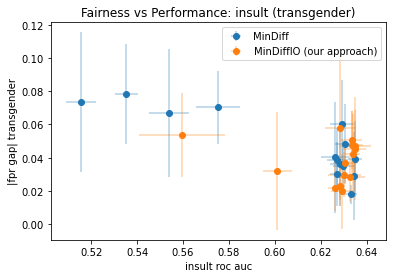}
    \includegraphics[scale=\figscale]{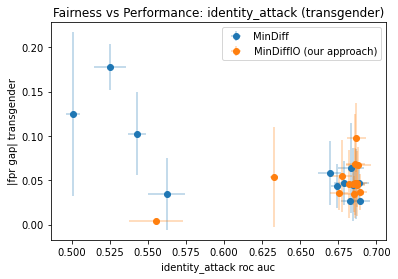}
    \caption{Empirical Pareto frontiers of fairness and performance for baseline MinDiff and newly introduced Mindiff-IO. Each point corresponds to a different min diff strength, and the error bars in each dimension represent 95\% confidence intervals. Note that the frontier achieved by Mindiff-IO not qualitatively different than that achieved by the baseline MinDiff approach. Note that there is a clear frontier that trades off performance and fairness for groups with a large FPR gap (black and homosexual). However, for groups with a low FPR gap (female, transgender), a small min diff strength $\lambda$ provides the best performance and fairness characteristics.}
    \label{fig:civil_comments}
\end{figure*}

\subsection{Detailed Civil Comments Results}
\label{sec:detailedcivilcomments}
System-level results for all four groups are shown in Figure \ref{fig:civilcommentsfullsystem}. Note that, in each case, component-based techniques outperform direct remediation by offering a higher performance for a given fairness.

Component results for each group, label pair are shown in Figure \ref{fig:civil_comments} for both the MinDiff and Mindiff-IO techniques. These Pareto frontiers are generated by varying the hyperparameter $\lambda$, where higher $\lambda$ values put more weight on the MinDiff loss term and lead to improved fairness at the cost of performance. Each data point in the plot is generated by training a model five times; the crosses in each dimension represent 95\% confidence intervals.

Note that each approach achieves a similar Pareto frontier, indicating the Mindiff-IO has similar performance and fairness characteristics. In other words, this experiment confirms that Mindiff-IO does not sacrifice classifier fairness or performance for individual classifiers. In the next experiment, we will provide training speed measurements to demonstrate the scaling advantages of Mindiff-IO.
\end{document}